\newtheorem{theorem}{Theorem}
\newtheorem{corollary}{Corollary}[theorem]
\newtheorem{lemma}[theorem]{Lemma}
\newtheorem{observation}[theorem]{Observation}
\definecolor{applegreen}{rgb}{0.55, 0.71, 0.0}
\definecolor{darkelectricblue}{rgb}{0.03, 0.51, 0.57}
\definecolor{atomictangerine}{rgb}{1.0, 0.6, 0.4}
\newcommand{\ket}[1]{| #1 \rangle}
\newcommand{\bra}[1]{\langle #1 |}
\newcommand{\stkout}[1]{\ifmmode\text{\sout{\ensuremath{#1}}}\else\sout{#1}\fi}
\newcommand\norm[1]{\left\lVert#1\right\rVert}
\definecolor{forestgreen}{rgb}{0.13, 0.55, 0.13}
 \def\R{\mathbbm{R}}
 \DeclareMathOperator{\tr}{tr}
\begin{document}
\title{Causal inference with imperfect instrumental variables}
\author{Nikolai Miklin}
\thanks{nikolai.miklin@hhu.de}
\affiliation{International Centre for Theory of Quantum Technologies (ICTQT), University of Gdansk, 80-308 Gdansk, Poland}
\affiliation{Heinrich Heine University D{\"u}sseldorf, Universit{\"a}tsstra{\ss}e 1, 40225 D{\"u}sseldorf, Germany}

\author{Mariami Gachechiladze}
\thanks{mgachech@uni-koeln.de}
\affiliation{Institute for Theoretical Physics, University of Cologne, 50937 Cologne, Germany}

\author{George Moreno}
\affiliation{International Institute of Physics, Federal University of Rio Grande do Norte, 59078-970, P. O. Box 1613, Natal, Brazil}
\author{Rafael Chaves}
\affiliation{International Institute of Physics, Federal University of Rio Grande do Norte, 59078-970, P. O. Box 1613, Natal, Brazil}
\affiliation{School of Science and Technology, Federal University of Rio Grande do Norte, 59078-970 Natal, Brazil}

\date{\today}
\begin{abstract}
Instrumental variables allow for quantification of cause and effect relationships even in the absence of interventions. To achieve this, a number of causal assumptions must be met, the most important of which is the independence assumption, which states that the instrument and any confounding factor must be independent. However, if this independence condition is not met, can we still work with imperfect instrumental variables? Imperfect instruments can manifest themselves by violations of the instrumental inequalities that constrain the set of correlations in the scenario. In this paper, we establish a quantitative relationship between such violations of instrumental inequalities and the minimal amount of measurement dependence required to explain them. As a result, we provide adapted inequalities that are valid in the presence of a relaxed measurement dependence assumption in the instrumental scenario. This allows for the adaptation of existing and new lower bounds on the average causal effect for instrumental scenarios with binary outcomes. Finally, we discuss our findings in the context of quantum mechanics.
\end{abstract}

\maketitle 

\section{Introduction}
Inferring causal relations from data is a central goal in any empirical science. Yet, in spite of its importance, causality has remained a thorny issue. Misled by the commonplace sentence stating that ``\emph{correlation does not imply causation}'', causal inference persists in the view of many as a noble but practically impossible task. Contrary to that, however, the surge and development of the causality theory \cite{pearl2009causality,spirtes2000causation} has proven formal conditions under which cause and effect relations can be extracted.

Consider the simplest and fundamental question of deciding whether observed correlations between two variables, $A$ (also known as treatment) and $B$ (also known as effect), are due to some direct causal influence of the first over the second, or due to a common cause, a third, potentially latent (non-observable) variable $\Lambda$. Both causal models are observationally equivalent, meaning that both models can generate the same set of possible correlations observed between the target variables. Notwithstanding, causal conclusions can be reached if, instead of passively observing the events, we perform interventions \cite{pearl2009causality,Balke1997,janzing2013quantifying}. In particular, interventions on $A$ put this variable under the experimenter's control, turning it independent of any latent common cause. If after the intervention, one still observers correlations between $A$ and $B$, then it is possible to unambiguously conclude that $A$ is a cause of $B$.  Interventions, however, are often unavailable for a variety of practical, fundamental, or ethical issues.

An elegant way to circumvent such issues are the instrumental variables \cite{pearl2009causality,wright1928tariff,angrist1996identification,greenland2000introduction,rassen2009instrumental,hernan2006instruments,lousdal2018introduction,Kedagni2020}. If a proper instrument $X$, correlated with $A$ but statistically independent of $\Lambda$, can be found, then the causal effect of $A$ over $B$ can be estimated even in the absence of interventions or structural equations. Nevertheless, since the instrumental conditions depend on an unobservable variable, identifying an instrument seems to be a matter of judgment that cannot be supported solely by the data. To cope with that, instrumental inequalities have been introduced \cite{pearl1995testability,bonet2013instrumentality,poderini2020exclusivity}, constraints that should be respected by any experiment in compliance with the instrumental assumptions. Thus, the violation of instrumental inequality is an explicit proof that one does not have a proper instrument. Does that mean, however, that no causal inference at all can be made if an instrumental inequality is violated? Or can we still rely on that instrument, even though imperfect, to infer causal relations?

Motivated by these questions, we analyze in detail a generalization of the instrumental causal structure, where we drop the assumption that one has a perfect instrument. More specifically, we relax the assumption that the instrumental variable $X$ should be independent of the latent factor $\Lambda$. Considering the case where $A$ and $B$ are dichotomic and $X$ is also discrete, we derive new instrumental inequalities that take explicitly into account the correlation between $X$ and $\Lambda$. We also generalize the bounds on the average causal effect (ACE) \cite{pearl2009causality,Balke1997,janzing2013quantifying} for more general instruments. 

Finally, we make a connection with the field of quantum foundations, where violation of instrumental inequalities can appear without relaxing the measurement independence assumption~\cite{hall2016significance,Hall2020,chaves2015unifying,chaves2021causal}. Using our results, we establish the minimal measurement dependence needed in the classical instrumental scenario to explain such violations and, as a result, we analyze the robustness of instrumental tests as witnesses of non-classical behavior.

The paper is organized as follows. In Sec.~\ref{sec2} we discuss how instrumental variables can be employed to put lower bounds on the cause and effect relations between two variables. In Sec.~\ref{sec3} we discuss the violations of independence assumption and how modified instrumental inequalities and causal bounds on ACE can be derived to take that into account. In Sec.~\ref{sec4} we discuss quantum violations of the modified inequalities. In Sec.~\ref{sec5} we discuss our findings and point out interesting questions for future research. 

\textbf{Notations:} Throughout the paper, we denote random variables by capital letters $A,B,X$ and $\Lambda$, as well as $\Lambda_X$,$\Lambda_A$,$\Lambda_B$. Without loss of generally, we consider these random variables taking values in the set of non-negative integers $\mathds{Z}_{0+}$. Probability of an event $E$ is denoted as $p(E)$. We use a common shorthand notation $p(a)=p(A=a)$ to denote the probability of $A$ taking value $a$. Similar shorthand notation is used for conditional probabilities, e.g., $p(a|x) = p(A=a|X=x)$, and interventions, e.g., $p(b|\mathrm{do}(a)) = p(B=b|\mathrm{do}(A=a))$. We use one exception to this rule for probabilities of the form $p(i,j|k)$ and $p(l|\mathrm{do}(m))$, which should be read as $p(A=i,B=j|X=k)$ and $p(B=l|\mathrm{do}(A=m))$, respectively, for any $i,j,k,l,m\in \mathds{Z}_{0+}$. We also use the common notation $[n] = \{0,1,\dots,n-1\}$.

\section{Instrumental variables, instrumental inequalities and causal bounds}
\label{sec2}

Before getting into details and illustrating the power of an instrumental variable as a causal inference tool, we discuss a simple linear structural model,  $b=\beta a+ \lambda$, where $\beta$ can be understood as the strength of the causal influence of $A$ over $B$ and $\Lambda$ is a latent factor that might affect both $A$ and $B$. By introducing the instrumental variable $X$ and assuming its statistical independence from $\Lambda$, one can infer the causal strength $\beta$. For that aim, it is enough to multiply both sides of the structural equation by $x$ and compute the observed correlations, defined as $\mathrm{corr}(A,B)=\langle A,B \rangle / \langle A \rangle \langle B \rangle$ where $\langle A,B \rangle=\sum_{a,b}( a\cdot b) p(a,b)$ is the expectation value of $A$ and $B$. By doing that, we obtain that $\beta=\mathrm{corr}(X,B)/\mathrm{corr}(X,A)$.

More formally, an instrumental variable $X$ has only a direct causal influence over $A$ and should be independent of any latent factors acting as a common cause for variables $A$ and $B$. This assumption is known by various names such as the independence assumption \cite{rassen2009instrumental}, ignorable treatment assignment \cite{angrist1996identification}, no confounding for the effect of $X$ on $B$ \cite{hernan2006instruments}, and in the literature of quantum foundations is termed as the measurement independence assumption \cite{wood2015lesson,chaves2015unifying,hall2016significance,Hall2020,chaves2021causal}, an issue of crucial relevance for the violation of Bell inequalities \cite{bell1964einstein,big2018challenging}. Furthermore, even though $X$ and $B$ might be correlated, those correlations can only be mediated by $A$, the so-called exchangeability assumption \cite{lousdal2018introduction,Kedagni2020}. That is, $X$ should not have any direct causal influence over $B$. See Fig.~\ref{fig: Instrumental}a) for a directed acyclic graph (DAG) description of the instrumental scenario. Altogether, any observed distribution $p(a,b,x)$ compatible with these instrumental conditions should then be decomposable as
\begin{equation}
    p(a,b,x)=\sum_{\lambda}p(a\vert \lambda,x)p(b\vert \lambda,a)p(x)p(\lambda).
\end{equation}
Typically, instead of looking at the joint distribution $p(a,b,x)$, one rather considers the conditional distribution $p(a,b \vert x)$ that, under the same causal assumptions, can be decomposed as
\begin{equation}
\label{eq:instrumental}
    p(a,b \vert x)=\sum_{\lambda}p(a\vert \lambda,x)p(b\vert \lambda,a)p(\lambda).
\end{equation}

The set of probability distributions of the form in Eq.~(\ref{eq:instrumental}) is bounded in the space of all possible distributions $p(a,b|x)$. These bounds are given by the so-called \emph{instrumental inequalities}~\cite{pearl1995testability,bonet2013instrumentality,poderini2020exclusivity}. For the simplest case of dichotomic variables there is only one type of instrumental inequalities, which we will call Pearl's inequality~\cite{pearl1995testability} and can be summarized as follows
\begin{align}\label{eq:binary_instrumental_ineqs}
    p(j,0|0)+p(j,1|1)\leq 1,\qquad p(j,0|1)+p(j,1|0)\leq 1, \qquad \text{for } j\in \{0,1\}.
\end{align}

We have shown above that in the case of linear dependence of $B$ on $A$, the instrumental variable $X$ can be used to determine the strength of this dependence exactly. Importantly, the instrumental variable can be used for causal inference even in the absence of structural models, something typical in the context of quantum information and refereed there as the device-independent framework \cite{pironio2016focus,chaves2018quantum}. In particular, simply from the observed data $p(a,b \vert x)$ one can infer the effect of interventions on the variable $A$ and thus obtain a lower bound on the average causal effect $\mathrm{ACE}_{A \rightarrow B}$ defined as
\begin{equation}\label{eq:ACE_def}
\mathrm{ACE}_{A \rightarrow B} = \max_{a,a^{\prime},b} \vert p(b\vert \mathrm{do}(a))-p(b\vert \mathrm{do}(a^{\prime})) \vert,
\end{equation}
in which 
\begin{eqnarray}
p(b|\mathrm{do}(a)) = \sum_{\lambda}p(b|a,\lambda)p(\lambda),
\end{eqnarray}
and $\mathrm{do}(a)$ represents the intervention over the variable $A$. As shown in Ref.~\cite{Balke1997}, for the case of binary random variables $A,B$ and $X$ the value of the average causal effect in Eq.~(\ref{eq:ACE_def}) can be lower-bounded as 
\begin{equation}\label{eq: Balke 1}
    \mathrm{ACE}_{A \rightarrow B} \geq 2p(0,0|0)+p(1,1|0)+p(0,1|1)+p(1,1|1)-2.
\end{equation}
The bound above is particularly relevant because it shows that the effect of interventions can be inferred simply from the observational data. Thus, instrumental variables offer a central tool for situation where interventions are not possible. 

The bound in Eq.~(\ref{eq: Balke 1}) is one of the eight expressions given in Ref.~\cite{Balke1997} which are proven to provide non-trivial lower bounds on $\mathrm{ACE}_{A \rightarrow B}$. The three of these eight bounds can be obtained by relabeling the one in Eq.~(\ref{eq: Balke 1}) and the rest four are not interesting for our purposes, since they hold for any causal structure. For the case of more general random variables (not only binary), one can obtain a system of linear inequalities of the form
\begin{equation}
    \mathrm{ACE}_{A \rightarrow B} \geq \max_i\{C_i\},
\end{equation}
where $C_i = \sum c^{a,b,x}_ip(a,b|x)$, are linear expressions of the probabilities with $c^{a,b,x}\in \mathds{R}$ and the maximum is taken over all such expressions. These lower bounds $C_i$ can be found using the tools of linear programming \cite{boyd2004convex}. We refer to this type of bounds on the average causal effect as \emph{causal bounds}. The causal bound in Eq.~(\ref{eq: Balke 1}) we denote as $C_1$. Other bounds studied in this work are given in Section~\ref{sec: nonbinary}. 

In this work, we focus on the case where the variables $A$ and $B$ are binary, i.e., taking values $a,b=0,1$ but the instrumental variable can  take more values (we resort to an arbitrary set of values when discussing the instrumental inequalities and the following two cases $x \in\{0,1\}$ and $x \in\{0,1,2\}$, when referring to the problem of causal bounds). At the same time, the methods developed in this paper are applicable to the general case where all the random variables take values in arbitrary finite sets.

\begin{figure}
    \centering
    \includegraphics[scale = .60]{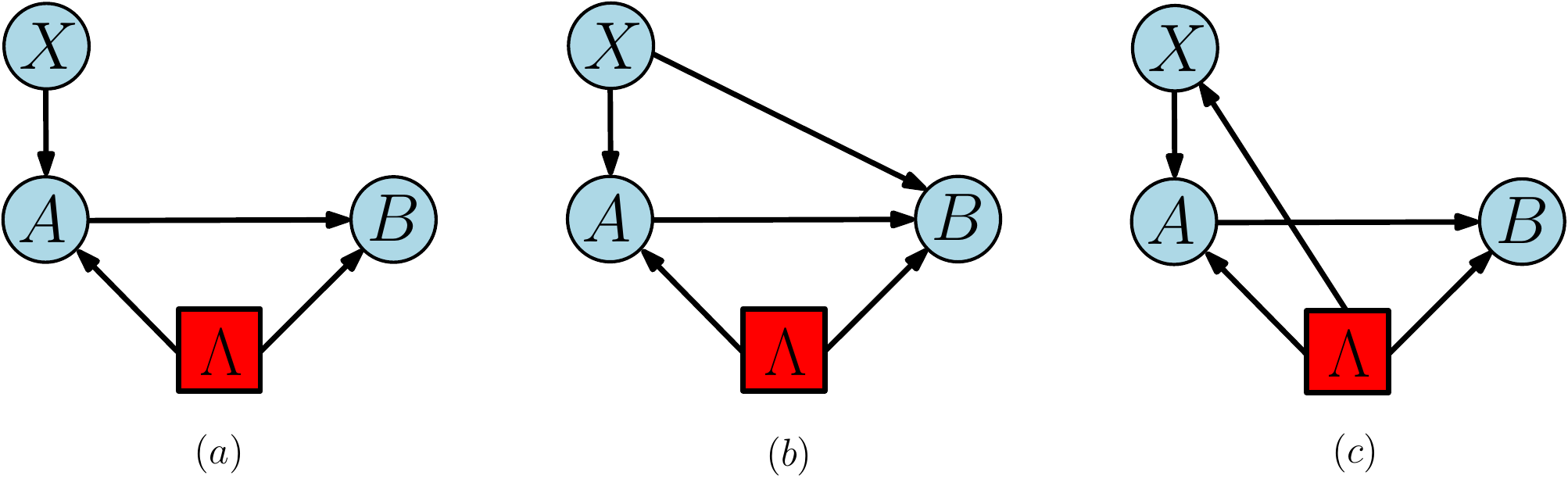}
    \caption{Causal graphs describing the instrumental scenario and its relaxations. Circular nodes correspond to observed variables, and rectangular ones are latent. Directed edges represent the causal links. (a) The instrumental scenario: the controlled variable $X$ is completely independent of a latent variable $\Lambda$. (b) A relaxed instrumental scenario, where the exchangeability assumption is relaxed, in other words, there is a direct causal influence of $X$ over $B$. We are not focusing on this relaxation.
    (c)  A relaxed instrumental scenario, where the independence assumption is relaxed. Differently from (a),  there is a causal link from $\Lambda$ to $X$. Consequently, we no longer assume that the instrumental variable $X$ and the common cause $\Lambda$ are independent, that is, $p(x,\lambda) \neq p(x)p(\lambda)$. We are focusing on this relaxation.}
    \label{fig: Instrumental}
\end{figure}

\section{Relaxing the independence assumption}
\label{sec3}
For the causal bounds such as in Eq.~\eqref{eq: Balke 1} to hold, one has to guarantee that the instrumental causal assumptions are fulfilled. If any instrumental inequality such as in Eq.~\eqref{eq:binary_instrumental_ineqs} is violated by the observed data $p(a,b\vert x)$, then one can unambiguously conclude that at least one of the instrumental assumptions does not hold. Such a violation can have two distinct roots. As shown in Refs.~\cite{chaves2018quantum,nery2018quantum,van2019quantum,agresti2020experimental}, even if one imposes the instrumental causal structure to a quantum experiment, still some instrumental inequalities can be violated. This can be seen as a stronger version of Bell's theorem \cite{bell1964einstein}, showing that correlations mediated via quantum entanglement can fail to have a description in terms of standard causal models. The second kind of mechanism, purely classical, and the one we mainly focus on in this paper, is the failure of causal assumptions.

For instance, the violation of an instrumental inequality could be motivated by a direct causal influence of $X$ over $B$, a violation of the  exchangeability assumption shown in Fig.~\ref{fig: Instrumental}b), a scenario analyzed in Ref.~\cite{chaves2018quantum}. Here, as shown in Figs.~\ref{fig: Instrumental}c) we focus on the violation of the independence assumption. Differently from the typical scenario, we no longer assume that the instrumental variable $X$ and the common source $\Lambda$ are independent, that is, $p(x,\lambda) \neq p(x)p(\lambda)$.

In order to facilitate our analysis, we focus on the DAG including an additional causal link between a latent variable  $\Lambda$ and instrument $X$ (see Fig.~\ref{fig: Instrumental}c)). We treat a realization of $\Lambda$ as a vector $(\lambda_x,\lambda_a,\lambda_b)$, where $\lambda_x\in[m_x]$, $\lambda_a$ takes its values in $[k_a^{m_x}]$, and $\lambda_b\in [k_b^{k_a}]$.  In this relaxed case, any distribution $p(a,b|x)$  factorizes as follows,
\begin{align}
\label{eq: Instrumental_md}
\nonumber
p(a,b|x) = & \frac{1}{p(x)}\sum_{\lambda_{x},\lambda_{a},\lambda_b}p(a|x,\lambda_a)p(b|a,\lambda_b)p(x|\lambda_x)p(\lambda_x,\lambda_{a},\lambda_b)\\
= & \frac{1}{p(x)}\sum_{\lambda_{a},\lambda_b}p(a|x,\lambda_a)p(b|a,\lambda_b)p(x,\lambda_{a},\lambda_b),
\end{align}
where we use the same notation $p(\cdot)$ for different response functions in order to avoid cumbersome expressions. Moreover, we took, without loss of generality, that $p(x|\lambda_x)=\delta_{x,\lambda_x}$, $\delta_{\cdot,\cdot}$ representing the Kronecker delta. Similarly, conditional probabilities $p(a|x,\lambda_a)$ and $p(b|a,\lambda_b)$ can be chosen to be deterministic, leading to,
\begin{equation}
\label{eq: Instrumental_md deterministic}
p(a,b|x) = \frac{1}{p(x)}\sum_{\lambda_{a},\lambda_b}\delta_{a,f_{\lambda_a}(x)}\delta_{b,g_{\lambda_b}(a)}p(x,\lambda_{a},\lambda_b)
\end{equation}
where $f_i(\cdot)$ and $g_i(\cdot)$ denote deterministic functions, specified by $\lambda_a$ and $\lambda_b$, respectively. 

In analogy to Ref.~\cite{chaves2015unifying}, we use a common measure of dependence between $X$ and $\Lambda$ for the instrumental scenario given by
\begin{align}\label{eq: meas dep def}
\mathcal{M}_{X:\Lambda} = & \sum_{x,\lambda_a,\lambda_b}|p(x,\lambda_a,\lambda_b) - p(x)p(\lambda_a,\lambda_b)|.
\end{align}
Crucially to our subsequent analysis, we cast it as  the  $l_1$-norm of the following vector,
\begin{eqnarray}
\label{eq: md}
\mathcal{M}_{X:\Lambda} & = & \norm{M\mathbf{q}}_{l_1},
\end{eqnarray}
where $\mathbf{q}_{\lambda_x,\lambda_a,\lambda_b}=p(\lambda_x,\lambda_a,\lambda_b)$ and for the canonical basis $\{\mathbf{e}_{i,j,k}\}_{i,j,k}$ in $\R^{m_x k_a^{m_x}k_b^{k_a}}$, we have a matrix $M$,

\begin{equation}
M = \sum_{x}\sum_{\lambda_x,\lambda_a,\lambda_b}\left(\delta_{x,\lambda_x} - p(x)\right) \mathbf{e}_{x,\lambda_a,\lambda_b}\cdot \mathbf{e}_{\lambda_x,\lambda_a,\lambda_b}^T.
\end{equation}

\subsection{Quantifying violation of the independence assumption}
The observed correlations in the instrumental experiment given by the observed probability distribution $p(a,b \vert x)$, as discussed in previous sections, allow us to evaluate the instrumental inequalities or lower bound the strength of the causal influence from $A$ to $B$. Violation of these inequalities implies that the instrumental assumptions were not met in the experiment. As mentioned before, it is important to note that this claim only works if all the latent variables are classical. Curiously, the theory of causality has recently been generalized to quantum causal modeling ~\cite{leifer2013towards,fritz2016beyond,henson2014theory,chaves2015information,pienaar2015graph,costa2016quantum,allen2017quantum}. In the latter case, the latent variables are quantum states that may be entangled, and the classical variables are obtained through quantum measurements. Quantum causal modeling differs from classical causal modeling in its predictions and as recently demonstrated in Refs.~\cite{chaves2018quantum,Gachechiladze2020,agresti2021experimental}, if the hidden common cause is allowed to be a quantum entangled state, the bounds obtained for classical instrumental causal structure can be violated. This is true for both instrumental inequalities and causal bounds.

In this paper, taking a purely classical perspective on causality, we aim to quantify how much of the above-mentioned violation translates into a relaxation of the independence assumption. More precisely, we aim to find the minimal amount of dependence necessary to explain the violation of either instrumental inequalities or causal bounds.

Given a linear inequality valid for the instrumental scenario $K_{\text{inst}} \geq 0$, (e.g.,  $K_{\text{inst}} =\mathrm{ACE}_{A\rightarrow B}- C_i$), if it is violated by a fixed amount $\alpha$, we want to establish what is the minimal amount of dependence, $\mathcal{M}_{X:\Lambda}$ that could reproduce this violation. Here, we cast this as an optimization problem,
\begin{align}
    \begin{split}
	    \min_{\mathbf{q}} \quad & \norm{M\mathbf{q}}_{l_1} \\
		\text{s.t.} \quad & K_{\text{inst}}\leq   -\alpha, \\	 
		& \sum_{\lambda_a,\lambda_b}\mathbf{q}_{x,\lambda_a,\lambda_b} = p(x), \quad \forall x\in [m_x],\\
		& \mathbf{q}\geq0 .
		  \label{Optimization: causal}
		      \end{split}
	\end{align}
Note that the normalization of $\mathbf{q}$ is implied by the normalization of $p(x)$. We are ready to state our first result.

\begin{observation}\label{obs:linear_dependency}
The minimal dependence needed to explain a fixed violation $\alpha$ of a linear inequality valid for the instrumental scenario is a monotonic convex piecewise linear function in $\alpha$.
\end{observation}

To see that this statement holds, first we bring the problem in Eq.~(\ref{Optimization: causal}) to a standard primal form of a linear program (LP)~\cite{Chaves2015}. 
\begin{align}
    \begin{split}
	    \max_{\mathbf{q, t}} \quad &- \mathbf{1}^T\cdot \mathbf{t}\\
		\text{s.t.} \quad &  \bm{[} M,-\openone \bm{]} \left[\begin{array}{c} \mathbf{q}\\ \mathbf{t} \end{array}\right]\leq \mathbf{0},\\
		&  \bm{[}-M,-\openone\bm{]}\left[\begin{array}{c} \mathbf{q}\\ \mathbf{t} \end{array}\right]\leq \mathbf{0},\\
		&K\cdot P\cdot \mathbf{q} \leq -\left[\begin{array}{c} \alpha\\ \mathbf{0} \end{array}\right],\\
	  & \Delta \cdot \mathbf{q}\leq \mathbf{p}_x,\\
		 -&\Delta\cdot \mathbf{q}\leq -\mathbf{p}_x,\\
	 -&\mathbf{q}\leq \mathbf{0}.
    \end{split}\label{eq: primal LP}
\end{align}

In the above LP, we used the following notations. $\mathbf{1}$ is the vector of $1$s, and similarly, $\mathbf{0}$ is the vector of $0$s. The matrix $K$ specifies the coefficients in the inequality $K_{\text{inst}}$ and some additional conditions that need to be specified for a specific problem  (E.g., the condition on the do-probabilities, under which $\mathrm{ACE}_{A\rightarrow B}$ becomes a linear function. If $\mathrm{ACE}_{A\rightarrow B} = |p(0|do(0))-p(0|do(1))|$, then the aforementioned condition is either $p(0|do(0)-p(0|do(1))\geq 0$ or $p(0|do(0)-p(0|do(1))\leq 0$). A matrix $P$ is a probability matrix such that its columns correspond to the deterministic assignments given by $f_{\lambda_a}(x)$ and $g_{\lambda_b}(a)$ in Eq.~\eqref{eq: Instrumental_md deterministic}. Finally, $\Delta$ denotes a matrix with entries equal to $1$ if the corresponding value of $\lambda_x$ in $\mathbf{q}$ is $x$ and $0$ otherwise for all values of $x\in [m_x]$. $\mathbf{p}_x$ is a vector of probabilities $p(x)$. 

Below, we give the corresponding dual LP to the one in Eq.~\eqref{eq: primal LP},
\begin{align}
    \begin{split}
         \min_{\mathbf{y},u,v,\mathbf{z}}  \quad & - \alpha u + \mathbf{p}_x^T \cdot \mathbf{z} \\
         \text{s.t.} \quad &  M^T\cdot\mathbf{y}+P^T\cdot K^T \left[\begin{array}{c} {u}\\ \mathbf{v} \end{array}\right]  +\Delta^T\cdot  \mathbf{z} \geq \mathbf{0},\\
         & \mathbf 0\leq \mathbf y \leq \mathbf 2,\\
         &  u \geq 0,\quad  \mathbf{v} \geq 0.
    \end{split}\label{eq: MD dual LP}
\end{align}
In the above, we introduced the notation $\mathbf{2}$, which is a vector of all $2$s and $\mathbf{y,z,v}$ and $u$ are the dual variables.

We can see from the above dual formulation of the LP that the solution must be piecewise linear in $\alpha$. Indeed, since the feasibility region of the above LP is a polytope defined by a finite set of constraints, there is a finite set of possibly optimal assignments to $u$ and $\mathbf{z}$. Hence, if we change $\alpha$ slowly from $0$ to its maximal value, the solution for $u$ might change in at most a finite number of points for $\alpha$. Moreover, it must be clear that for $\alpha=0$, i.e., in case the inequality $K_{\mathrm{inst}}\geq 0$ is valid, no dependence is required, and thus the output of the optimization problem should be $0$. Thus, it must also hold that $\mathbf{p}_x^T \cdot \mathbf{z}=0$ in the vicinity of $\alpha=0$. 

Since any solution of the above LP, defining the slope $u$, remains a solution for all valid values of $\alpha$, it follows that even though the whole function can be piecewise linear, i.e., have different slopes, these slopes may only increase. In other words, the resulting dependence of $\mathcal{M}_{X:\Lambda}$ on $\alpha$ is convex and monotonic.

Finally, we must note that the primal problem is feasible, if the violation $\alpha$ is at most the maximum possible, which can be attained by one of the deterministic assignments given by $f_{\lambda_a}(x)$ and $g_{\lambda_b}(a)$ in Eq.~\eqref{eq: Instrumental_md deterministic} expressed as columns of matrix $P$.

\subsection{Dependencies in the simplest instrumental scenario}

Building on the results of this section, here we investigate the minimal required dependence for a fixed violation of instrumental inequalities and bounds on ACE in the simplest instrumental scenario when all the observed random variables are binary. For the both types of inequalities, namely instrumental inequality in Eq.~(\ref{eq:binary_instrumental_ineqs}) and causal bound in Eq.~(\ref{eq: Balke 1}) we give exact solutions to the corresponding linear programs in Eq.~(\ref{eq: MD dual LP}).

\begin{lemma}\label{lemma:instr_ineq}
For the instrumental scenario with binary observed random variables $X,A,B$ and a latent variable $\Lambda$, the minimal dependence required to explain a violation of the instrumental inequality by $\alpha$ is $\mathcal{M}_{X:\Lambda} = 4p(X=0)p(X=1)\alpha$.
\end{lemma}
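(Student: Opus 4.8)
The plan is to solve the linear program \eqref{eq: MD dual LP} exactly for this case by sandwiching $\mathcal{M}_{X:\Lambda}$ between matching lower and upper bounds, writing $p(0),p(1)$ for $p(X=0),p(X=1)$. First I would use relabeling symmetry to fix a single representative inequality, say $p(0,0|0)+p(0,1|1)\le 1$, since all four inequalities in \eqref{eq:binary_instrumental_ineqs} are equivalent under permutations of the outcomes, and the target value $4p(0)p(1)\alpha$ is invariant under relabeling of $X$. Using the deterministic decomposition \eqref{eq: Instrumental_md deterministic}, I would identify the two sets of assignments $(\lambda_a,\lambda_b)$ feeding the two conditionals: $R_0=\{f_{\lambda_a}(0)=0,\ g_{\lambda_b}(0)=0\}$ for the $X=0$ term and $R_1=\{f_{\lambda_a}(1)=0,\ g_{\lambda_b}(0)=1\}$ for the $X=1$ term. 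The structural fact I would lean on is that $R_0\cap R_1=\emptyset$, because the two terms demand opposite values of $B$ at the same input $A=0$; this disjointness is exactly what validates the inequality at $\alpha=0$.

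For the lower bound I would exploit that $X$ is binary. Setting $r_{\lambda_a,\lambda_b}=\sum_x \mathbf{q}_{x,\lambda_a,\lambda_b}$ and $d_{\lambda_a,\lambda_b}=\mathbf{q}_{0,\lambda_a,\lambda_b}-p(0)\,r_{\lambda_a,\lambda_b}$, the marginal constraint forces $\mathbf{q}_{1,\lambda_a,\lambda_b}-p(1)\,r_{\lambda_a,\lambda_b}=-d_{\lambda_a,\lambda_b}$, so the dependence \eqref{eq: md} collapses to $\mathcal{M}_{X:\Lambda}=2\sum_{\lambda_a,\lambda_b}|d_{\lambda_a,\lambda_b}|$ with $\sum_{\lambda_a,\lambda_b}d_{\lambda_a,\lambda_b}=0$. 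Substituting into the inequality (with $\sum_{R}d$ a shorthand for the sum of $d_{\lambda_a,\lambda_b}$ over $(\lambda_a,\lambda_b)\in R$) gives
\[
p(0,0|0)+p(0,1|1)-1=\Big(\sum_{R_0\cup R_1} r-1\Big)+\frac{1}{p(0)}\sum_{R_0}d-\frac{1}{p(1)}\sum_{R_1}d.
\]
Since $R_0,R_1$ are disjoint and $r$ is a probability vector, the first bracket is $\le 0$; bounding the remaining terms by the positive and negative parts $d^{\pm}$, which satisfy $\sum d^{+}=\sum d^{-}=\mathcal{M}_{X:\Lambda}/4$ because $\sum d=0$, yields $\alpha\le\big(\tfrac{1}{p(0)}+\tfrac{1}{p(1)}\big)\tfrac{\mathcal{M}_{X:\Lambda}}{4}=\tfrac{\mathcal{M}_{X:\Lambda}}{4p(0)p(1)}$, i.e. $\mathcal{M}_{X:\Lambda}\ge 4p(0)p(1)\alpha$.

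For the matching upper bound I would avoid re-solving the program and instead interpolate two explicit feasible points: an independent distribution ($\mathbf{q}_{x,\lambda_a,\lambda_b}=p(x)\,r_{\lambda_a,\lambda_b}$ with $r$ supported on $R_0\cup R_1$) that saturates the inequality with $\mathcal{M}_{X:\Lambda}=0$, and a deterministic distribution placing all $X=0$ weight in $R_0$ and all $X=1$ weight in $R_1$, which attains maximal violation $\alpha=1$ and, by direct evaluation of \eqref{eq: md}, $\mathcal{M}_{X:\Lambda}=4p(0)p(1)$. Because the violation is linear in $\mathbf{q}$ while $\mathcal{M}_{X:\Lambda}=\norm{M\mathbf{q}}_{l_1}$ is convex, the convex combination of these two points with weights $(1-\alpha,\alpha)$ is feasible at violation $\alpha$ and obeys $\mathcal{M}_{X:\Lambda}\le 4p(0)p(1)\alpha$, closing the gap and establishing the claimed equality for every feasible $\alpha\in[0,1]$.

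The main obstacle I anticipate is the tightness bookkeeping of the lower bound: one must check that the three inequalities used (discarding $\sum_{R_0\cup R_1} r-1$, and replacing $\sum_{R_0}d$ and $\sum_{R_1}d$ by the full $d^{+}$ and $d^{-}$ masses) can be saturated simultaneously by an admissible $\mathbf{q}\ge 0$ with the prescribed marginals $p(x)$ — equivalently, that the optimizer has $d\ge 0$ on $R_0$, $d\le 0$ on $R_1$, $d=0$ elsewhere, and $r$ supported on $R_0\cup R_1$. The interpolation construction of the previous paragraph already exhibits such a family, but as an independent certificate of optimality I would also confirm the value by producing a dual-feasible assignment for \eqref{eq: MD dual LP} with objective $4p(0)p(1)\alpha$, so that weak duality pins down the minimum without relying on having guessed the optimal primal point.
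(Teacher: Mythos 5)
Your proposal is correct, but it takes a genuinely different route from the paper's proof. The paper works on the dual side: it instantiates the dual LP of Eq.~\eqref{eq: MD dual LP} for this scenario, reduces it by hand (grouping the $32$ deterministic strategies, discarding redundant constraints), extracts the bound $u\leq 4p(X=0)p(X=1)+p(X=0)z_0+p(X=1)z_1$ by summing pairs of dual constraints, and finally exhibits an explicit dual-feasible assignment, so that the value $4p(X=0)p(X=1)\alpha$ is pinned down by LP duality. You instead argue entirely in the primal: your lower bound follows from the decomposition $\mathbf{q}_{x,\lambda}=p(x)r_\lambda + (\pm)d_\lambda$, the zero-sum property $\sum_\lambda d_\lambda=0$, and crucially the disjointness $R_0\cap R_1=\emptyset$ (which is exactly the combinatorial fact underlying Pearl's inequality), giving $\alpha\leq \tfrac{\mathcal{M}_{X:\Lambda}}{4}\bigl(\tfrac{1}{p(0)}+\tfrac{1}{p(1)}\bigr)=\tfrac{\mathcal{M}_{X:\Lambda}}{4p(0)p(1)}$; your matching upper bound comes from interpolating between an independent saturating point and a maximally dependent point, using linearity of the violation in $\mathbf{q}$ and convexity of the $l_1$-norm. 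Both arguments are sound; I checked the key steps (the identity $\mathcal{M}_{X:\Lambda}=2\sum_\lambda|d_\lambda|$ for binary $X$, the disjointness of $R_0$ and $R_1$ forced by $g_{\lambda_b}(0)=0$ versus $g_{\lambda_b}(0)=1$, and the value $4p(0)p(1)$ of the deterministic endpoint). Your route is more elementary and self-contained — no duality, no enumeration of strategies — and it makes the origin of the coefficient transparent: disjoint supports plus $\tfrac{1}{p(0)}+\tfrac{1}{p(1)}=\tfrac{1}{p(0)p(1)}$. What the paper's dual computation buys is integration with the surrounding machinery: the dual variable $u$ is precisely the slope invoked in Corollary~\ref{lemma:mod_inequalities} for the adapted inequalities, and the same dual template is reused nearly verbatim for the more intricate ACE bound of Lemma~\ref{lemm: ace meas dep}, where a bespoke primal structural argument like yours would be harder to devise. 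One remark: the ``tightness bookkeeping'' you worry about at the end is moot — since your lower bound holds for \emph{every} feasible $\mathbf{q}$ and your interpolated family attains it, the optimum is already determined, and the dual certificate you propose as an extra check is redundant.
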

\begin{proof}
All the binary instrumental inequalities are given in Eq.~(\ref{eq:binary_instrumental_ineqs}). We choose one of them (the results work for any other choice too, due to symmetry present in the problem) and insert it into the primal problem, 
\begin{equation}\label{eq:Instrumental_binary}
    K_{\text{inst}}=-p(00|0)-p(01|1)+1.
\end{equation} 
In the dual LP in Eq.~(\ref{eq: MD dual LP}), the matrix $K$ is $1\times 8$, which is a matrix representation of the expression $K_{\text{inst}}$ above. The matrix $P$ is $8\times 32$ with each column corresponding to a deterministic assignment of $X,A$ and $B$ given $\lambda$. The vector $\mathbf{z}$ has two components, which we call $z_0$ and $z_1$ and the vector $\mathbf{y}=[y_0,y_1,\dots,y_{31}]$ is $32$-dimensional. Finally, there is no vector $\mathbf{v}$ in our LP, as there are no additional linear constraints in $K$. 

From the definition of $M$, we derive that $M^T \mathbf{y}=\left[\begin{array}{c} {\ \ p(X=1)  \mathbf{\tilde{y}}}\\ {-p(X=0)  \mathbf{\tilde{y}}} \end{array}\right]$, where $-\mathbf{2}\leq\mathbf{\tilde{y}}\leq \mathbf{2}$ is a column vector, $\mathbf{\tilde{y}}^T=[\tilde{y}_0\dots\tilde{y}_{15}]$, where $\tilde{y}_i = y_i-y_{i+16},\; i\in [16]$. Moreover, note that $\Delta^T \mathbf{z}= \left[\begin{array}{c} {z_0  \mathbf{1}}\\ {z_1  \mathbf{1}} \end{array}\right]$.  Taking all the above into account, the LP takes the following form,
\begin{align}\label{eq:dual_linear_program_instr}
    \begin{split}
         \min_{\mathbf{\tilde y},u,z_0,z_1}  \quad & - \alpha u +p(X=0)z_0+p(X=1)z_1 \\
         \text{s.t.} \quad &  p(X=1)\tilde y_i+[P^T\cdot K^T]_i \,u  +z_0 \geq 0, \quad \forall i\in [16],\\
         \quad   - & p(X=0)\tilde y_i+[P^T\cdot K^T]_{i+16}\, u  +z_1 \geq 0, \quad \forall i\in [16],\\
         - &\mathbf 2 \leq \mathbf {\tilde{y}} \leq \mathbf 2, \quad u \geq 0.
    \end{split}
\end{align}

Here $[P^T\cdot K^T]_i$ is the $i$-th term of the vector $P^T\cdot K^T$. For $i [16]$, the expression $[P^T\cdot K^T]_i$ can take one of the two possible values, either $\left(1-\frac{1}{p(X=0)}\right)$ or $1$, and for $i \in \{16,\dots,31\}$, it can take one of the two possible values $\left(1-\frac{1}{p(X=1)}\right)$ or $1$. This simplifies the problem and by erasing redundant constraints we arrive at the final form of the LP which we solve explicitly.
\begin{align}\label{eq: lemma 1 final LP}
    \begin{split}
         \min_{\tilde{y}_0,\tilde{y}_1,\tilde{y}_2,u,z_0,z_1}  \quad & - \alpha u+ p(X=0)z_0+p(X=1)z_1 \\
         \text{s.t.} \quad &  p(X=1)\tilde y_0 - \frac{p(X=1)}{p(X=0)}\,u  +z_0 \geq 0,\quad p(X=1)\tilde y_i+u+z_0 \geq 0,\quad i\in\{1,2\},\\
         \quad   - & p(X=1)\tilde y_1-u  +\frac{P(X=1)}{P(X=0)} z_1\geq 0, \quad -p(X=1)\tilde y_i + \frac{p(X=1)}{p(X=0)}\, u  +\frac{p(X=1)}{p(X=0)}z_1 \geq 0, \  i\in\{0,2\},\\
         - &2 \leq \tilde{y}_i \leq 2,\quad i\in \{0,1,2\}, \qquad u \geq 0.
    \end{split}
\end{align}
By summing the first and the last inequalities for $i=0$, we directly get, $p(X=0)z_0+p(X=1)z_1\geq 0$. Finally, summing up the two inequalities, where the variable $u$ has a negative coefficient, we obtain an upper-bound on $u$,
\begin{align}
    u\leq & (\tilde y_0-\tilde y_1) p(X=0) p(X=1)+p(X=0)z_0+p(X=1)z_1\\
    \leq & 4 p(X=0) p(X=1)+p(X=0)z_0+p(X=1)z_1.
\end{align}
Using the upper-bound on $u$ we get that the objective function can be lower-bounded by the expression
\begin{align}
    \begin{split}
       -4 \alpha p(X=0) p(X=1)+\left(p(X=0)z_0+p(X=1)z_1\right)(1-\alpha)\geq   -4 \alpha p(X=0)P(X=1).
    \end{split}
\end{align}
As the final step, we note that the assignment: $u = 4p(X=0) p(X=1)$, $\tilde y_0=2$, $\tilde y_1=-2$, $\tilde{y}_2=0$, $z_0= 2p(X=1)(1-2p(X=0))$ and $z_1=-\frac{p(X=0)}{p(X=1)}z_0$ is a feasible point of the LP. Thus, $\mathcal{M}_{X:\Lambda} = 4p(X=0)p(X=1)\alpha$.
\end{proof}

We conclude that for a given violation $\alpha$, the uniformly distributed instrumental variable $X$ requires the highest dependence. The reverse also holds true: if the  instrumental variable is uniformly random, a given dependence will permit the lowest amount of violation. Our result implies that even though we do not have direct empirical access to the common source between $A$ and $B$, from observational data $p(a,b\vert x)$ alone we can lower-bound the amount of dependence $\mathcal{M}_{X:\Lambda}$ present in a given experiment.

Next we investigate how the violation of the lower bound on ACE as in Eq.~(\ref{eq: Balke 1}) translates to the required measurement dependence. 
\begin{lemma}\label{lemm: ace meas dep}
For the instrumental scenario with binary observed random variables $X,A,B$ and a latent variable $\Lambda$, the minimal measurement dependence required to explain a violation of the lower bound on ACE as in Eq.~(\ref{eq: Balke 1}) by $\alpha$ is $\mathcal{M}_{X:\Lambda} = \frac{4p(X=0)p(X=1)}{2-p(X=0)}\alpha$.
\end{lemma}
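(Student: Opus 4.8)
The plan is to follow the same dual-LP strategy used in the proof of Lemma~\ref{lemma:instr_ineq}, now with $K_{\text{inst}} = \mathrm{ACE}_{A\to B} - C_1$ in place of the instrumental-inequality expression. The first task is to linearize $\mathrm{ACE}_{A\to B}$: since $\mathrm{ACE}_{A\to B} = \max_{a,a',b}|p(b|\mathrm{do}(a)) - p(b|\mathrm{do}(a'))|$ is a maximum of absolute values, I would fix one branch — a choice of $(a,a',b)$ together with a sign — so that the objective becomes a single linear functional of the do-probabilities, and record the corresponding sign condition as the extra rows of $K$ (the ones paired with the dual multiplier $\mathbf{v}$). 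Using $p(b|\mathrm{do}(a)) = \sum_{\lambda_x,\lambda_a,\lambda_b}\delta_{b,g_{\lambda_b}(a)}\,\mathbf{q}_{\lambda_x,\lambda_a,\lambda_b}$ for the $\mathrm{ACE}$ part and the deterministic form of $p(a,b|x)$ in Eq.~\eqref{eq: Instrumental_md deterministic} for the $C_1$ part, both $\mathrm{ACE}_{A\to B}$ and $C_1$ become linear in $\mathbf{q}$, so the whole problem fits the primal form of Eq.~\eqref{eq: primal LP} and I can pass to the dual in Eq.~\eqref{eq: MD dual LP}.

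Next I would reuse the two structural identities established in Lemma~\ref{lemma:instr_ineq}: $M^T\mathbf{y}$ splits into a top block $p(X=1)\tilde{\mathbf{y}}$ and a bottom block $-p(X=0)\tilde{\mathbf{y}}$ with $-\mathbf{2}\le\tilde{\mathbf{y}}\le\mathbf{2}$, and $\Delta^T\mathbf{z}$ is the piecewise-constant vector with blocks $z_0\mathbf{1}$ and $z_1\mathbf{1}$. The crucial computation is then the vector $P^T K^T[u;\mathbf{v}]$, whose entries come from evaluating the linearized $\mathrm{ACE}_{A\to B}-C_1$ together with the sign constraint on each of the $32$ deterministic assignments encoded in the columns of $P$. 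Because the $C_1$ contribution carries the $1/p(x)$ weights from Eq.~\eqref{eq: Instrumental_md deterministic} while the $\mathrm{ACE}$ contribution does not, these entries should take only a few distinct values built from $1$, $1/p(X=0)$ and $1/p(X=1)$; as in Lemma~\ref{lemma:instr_ineq}, most constraints will be redundant and the LP should collapse to one over a handful of scalar variables $\tilde{y}_i, u, z_0, z_1$ and the branch multiplier $v$.

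To finish, I would bound the objective $-\alpha u + p(X=0)z_0 + p(X=1)z_1$ from below by combining the surviving constraints: summing the inequalities in which $u$ appears with a negative coefficient yields an upper bound on the slope $u$, and the saturated version of that bound should produce exactly $u = \frac{4p(X=0)p(X=1)}{2-p(X=0)}$, so that the optimal objective equals $-\frac{4p(X=0)p(X=1)}{2-p(X=0)}\alpha$. I would then certify optimality by exhibiting an explicit feasible assignment of $(\tilde{\mathbf{y}}, u, v, z_0, z_1)$ attaining this value, mirroring the closing step of Lemma~\ref{lemma:instr_ineq}, and conclude $\mathcal{M}_{X:\Lambda} = \frac{4p(X=0)p(X=1)}{2-p(X=0)}\alpha$.

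The main obstacle I anticipate is the ACE linearization. Unlike the instrumental inequality, $\mathrm{ACE}_{A\to B}$ is not a single linear expression, so I must identify which branch $(a,a',b)$ and which sign give the operative, minimal-dependence linearization, and verify that the discarded branches only contribute slacker constraints. I also expect the asymmetric denominator $2-p(X=0)$, in contrast to the symmetric factor $4p(X=0)p(X=1)$ of Lemma~\ref{lemma:instr_ineq}, to emerge precisely from the interplay between this branch choice — through the $\mathbf{v}$-constraint — and the asymmetric $1/p(x)$ weighting of the $C_1$ terms; tracking this asymmetry carefully, and confirming by a symmetry or direct branch-by-branch check that the selected branch governs the minimum, will be the delicate part.
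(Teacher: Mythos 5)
Your proposal follows essentially the same route as the paper's own proof: linearizing $\mathrm{ACE}_{A\to B}$ via the sign condition $p(0|\mathrm{do}(0))-p(0|\mathrm{do}(1))\geq 0$ encoded as the extra row of $K$ with dual multiplier $v$, reusing the block structure of $M^T\mathbf{y}$ and $\Delta^T\mathbf{z}$ from Lemma~\ref{lemma:instr_ineq}, extracting an upper bound on $u$ of the form $\frac{1}{2-p(X=0)}\bigl(4p(X=0)p(X=1)+\cdots\bigr)$ by summing constraints where $u$ has negative coefficient, and certifying optimality with an explicit feasible point. You also correctly anticipate the two delicate points the paper actually handles — the branch choice for the ACE linearization and the origin of the asymmetric denominator $2-p(X=0)$ in the interplay between the $v$-constraint and the $1/p(x)$ weights — so the plan is sound and matches the published argument.
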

\begin{proof}
The proof has a similar structure as the previous one, however it is more involving. The main reason for this is that the expression $K_{\text{inst}}=\mathrm{ACE}_{A\rightarrow B}-C_1$ is written not only in terms of probabilities $p(a,b|x)$, but also in terms of do-probabilities. Additionally, by definition ACE is not linear in do-probabilities, but we can linearize it without loss of generality by requesting that $p(0|\mathrm{do}(0))-p(0|\mathrm{do}(1))\geq 0.$
 In the dual LP in Eq.~(\ref{eq: MD dual LP}), the matrix $K$ is then $2\times 12$, which is a matrix representation of the expression $K_{\text{inst}}=\mathrm{ACE}_{A\rightarrow B}-C_1$ . The matrix $P$ is $12\times 32$ with each column corresponding to a deterministic assignment of $X,A$ and $B$ given $\lambda$ (which also gives deterministic assignments to the do-probabilities). The vector $\mathbf{z}$ has two components, which we call $z_0$ and $z_1$ and the vector $\mathbf{y}=[y_0,y_1,\dots,y_{31}]$ is $32$-dimensional. Finally, there is only a single element in vector $\mathbf{v}$ in our LP, which corresponds to the positivity of $p(0|\mathrm{do}(0))-p(0|\mathrm{do}(1))\geq 0$. 

The matrix $M$ is the same as in Lemma~\ref{lemma:instr_ineq}, $M^T \mathbf{y}=\left[\begin{array}{c} {\ \ p(X=1)  \mathbf{\tilde{y}}}\\ {-p(X=0)  \mathbf{\tilde{y}}} \end{array}\right]$, where $-\mathbf{2}\leq\mathbf{\tilde{y}}\leq \mathbf{2}$ is a column vector, $\mathbf{\tilde{y}}^T=[\tilde{y}_0\dots\tilde{y}_{15}]$, where $\tilde{y}_i = y_i-y_{i+16},\; i\in [16]$ and  $\Delta^T \mathbf{z}= \left[\begin{array}{c} {z_0  \mathbf{1}}\\ {z_1  \mathbf{1}} \end{array}\right]$. We need to solve the following LP,
\begin{align}
    \begin{split}
         \min_{\mathbf{\tilde y},u,v,z_0,z_1}  \quad & - \alpha u +p(X=0)z_0+p(X=1)z_1 \\
         \text{s.t.} \quad &  p(X=1)\tilde y_i+[P^T\cdot K^T]_{i,0} \,u+[P^T\cdot K^T]_{i,1} \,v + z_0 \geq 0, \quad \forall i\in [16],\\
         \quad   - & p(X=1)\tilde y_i+\frac{p(X=1)}{p(X=0)}\left([P^T\cdot K^T]_{i+16,0}\, u+[P^T\cdot K^T]_{i+16,1}\, v\right) +\frac{p(X=1)}{p(X=0)}z_1 \geq 0, \quad \forall i\in [16],\\
         - &\mathbf 2 \leq \mathbf {\tilde{y}} \leq \mathbf 2, \quad u \geq 0, \quad v \geq 0,
    \end{split}
\end{align}
where we denoted by $[P^T\cdot K^T]_{i,j}$ the element of the matrix $P^T\cdot K^T$ on $i$-th row and $j$-th column (counting from $0$). We give rows of $[K\cdot P]$ here for completeness: $ [K\cdot P]_{0} = \Big[\frac{2p(X=0)-2}{p(X=0)}$ , $\frac{3p(X=0)-2}{p(X=0)} , 1 , 2 , \frac{2p(X=0)-2}{p(X=0)} , \frac{3p(X=0)-2}{p(X=0)},$ $1,$  $2,$  $2,$  $\frac{3p(X=0)-1}{p(X=0)} , 1 , \frac{2p(X=0)-1}{p(X=0)} , 2, $  $ \frac{3p(X=0)-1}{p(X=0)} , 1 , \frac{2p(X=0)-1}{p(X=0)},2,1,\frac{p(X=1)-1}{p(X=1)},\frac{2p(X=1)-1}{p(X=1)},2,\frac{3p(X=1)-1}{p(X=1)},1,\frac{2p(X=1)-1}{p(X=1)},2,3,$ $\frac{p(X=1)-1}{p(X=1)},$ $\frac{2p(X=1)-1}{p(X=1)},$ $2,$ $\frac{3p(X=1)-1}{p(X=1)},$ $1,\frac{2p(X=1)-1}{p(X=1)}\Big]$, $[K\cdot P]_{1} = [0, -1, 1, 0, 0, -1, 1, 0, 0, -1, 1, 0, 0, -1, 1, 0, 0, -1, 1, 0, 0,$ $-1,$  $1,$  $0,$  $0,$  $-1,$ $1,$  $0,$  $0,$ $-1,$  $1, 0]$.

First, we derive an upper-bound on $u$. For the feasibility region the following must hold true for any $i,j\in [16]$ (which one gets simply by summing the two types of constraints above), 
\begin{align}
\begin{split}
    p(X=1)(\tilde y_i-\tilde{y_j})&+u\left([P^T\cdot K^T]_{i,0}+\frac{p(X=1)}{p(X=0)}[P^T\cdot K^T]_{j+16,0} \right)\\
    +z_0+\frac{p(X=1)}{p(X=0)}z_1 &+v\left([P^T\cdot K^T]_{i,1}
    +\frac{p(X=1)}{p(X=0)}[P^T\cdot K^T]_{j+16,1} \right)\geq 0.
    \end{split}
\end{align}
For $i=5$ and $j=5$, the values $[P^T\cdot K^T]_{5,0}=\frac{3p(X=0)-2}{p(X=0)}$, $[P^T\cdot K^T]_{21,0}=\frac{3p(X=1)-1}{p(X=1)}$, $[P^T\cdot K^T]_{5,1}=-1$, and $[P^T\cdot K^T]_{21,1}=-1$ lead to the condition $p(X=0)z_0+p(X=1)z_1\geq v$. For $i=0$ and $j=2$, for which $[P^T\cdot K^T]_{0,0} = 2-\frac{2}{p(X=0)}$ and $[P^T\cdot K^T]_{18,0} = 1-\frac{1}{p(X=1)}$,  $[P^T\cdot K^T]_{0,1}=0$, $[P^T\cdot K^T]_{18,1}=1$, we get
\begin{equation}
    (\tilde{y}_0-\tilde{y}_2)p(X=1)+ u \frac{p(X=0)-2}{p(X=0)}+\frac{P(X=1)}{P(X=0)}v +z_0+\frac{p(X=1)}{p(X=0)}z_1\geq 0,
\end{equation}
which means that $u\leq \frac{1}{2-p(X=0)}\left(4p(X=0)p(X=1)+p(X=1)v+p(X=0)z_0+p(X=1)z_1\right)$, since  $\tilde{y}_0-\tilde{y}_2\leq 4$.

Inserting this value in the objective function, we get,
\begin{align}
   & \frac{-\alpha}{2-p(X=0)}\left(4p(X=0)p(X=1)+p(X=1)v+p(X=0)z_0+p(X=1)z_1\right)+(p(X=0)z_0+p(X=1)z_1)\\
 \geq &   \frac{-\alpha 4p(X=0)p(X=1)}{2-p(X=0)}+\left(p(X=0)z_0+p(X=1)z_1\right)\left(1-\frac{\alpha}{2-p(X=0)}\right)-\frac{\alpha p(X=1)v}{2-p(X=0)}\\
 \geq &   \frac{-\alpha 4p(X=0)p(X=1)}{2-p(X=0)}+v\left(1-\alpha \right)\geq \frac{-\alpha 4p(X=0)p(X=1)}{2-p(X=0)}.
\end{align}
The last step follows as $\alpha\leq 1$.
As the final step, we note that the assignment: $u=\frac{4p(X=0)p(X=1)}{2-p(X=0)}$, $v=0$, $\tilde y_0=\tilde y_1 = \tilde y_4 = \tilde y_8 = \tilde y_9=\tilde y_{12} = 2$, $\tilde y_2 = \tilde y_3 = \tilde y_6 = \tilde y_7=\tilde y_{10}= \tilde y_{14} = -2$, $\tilde y_5 = \tilde y_{13} = 2-\frac{4p(X=0)}{2-p(X=0)}$, $\tilde y_{11} = \tilde y_{15} =-\frac{2p(X=0)}{2-p(X=0)}$, and $z_0= -2p(X=1)(1-\frac{4P(X=1)}{2-p(X=0)})$, $z_1=-\frac{p(X=0)}{p(X=1)}z_0$ is a feasible point of the LP, which means that the lower bound of $-\alpha\frac{4p(X=0)p(X=1)}{2-p(X=0)}$ on the objective function is achievable.
\end{proof}

Until now we asked a question which degree of measurement dependence is required to explain violation of a linear inequality (e.g., instrumental inequalities or causal bounds) and we gave an analytical solution for the simplest scenario with binary observed variables. 
One can, however, ask the reverse question of how the linear inequalities change in the simplest instrumental scenario, if some level of measurement dependence is present in a given setup. This is the \emph{inverse} problem to the one considered in this section. Since both of these problems aim at estimating the same dependency, they have the same solution, namely the piecewise linear dependence in Observation~\ref{obs:linear_dependency}. 
As a result, we can derive adapted linear inequalities (e.g., binary instrumental inequalities and causal bounds) that accounts for the dependence between $X$ and $\Lambda$, explicitly.

\begin{corollary}\label{lemma:mod_inequalities}
Given a linear inequality valid for the simplest instrumental scenario, $K_{\text{inst}} \geq 0$, the \textit{adapted} linear inequality in terms of the measurement dependency is,
\begin{equation}
    K_{\text{inst}}+\frac{\mathcal{M}_{X:\Lambda}}{u}\geq 0,
\end{equation}
where $u$ is the optimization parameter of the dual LP in Eq.~(\ref{eq: MD dual LP}).

\end{corollary}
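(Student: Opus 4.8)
The plan is to read the adapted inequality directly off the linear relationship between minimal dependence and violation that the earlier results already pin down, and then to reinterpret that relationship as a constraint on realizable distributions via a contrapositive argument. The starting point is that, in the simplest (binary) instrumental scenario, the minimal dependence is not merely piecewise linear but a single linear ray through the origin, $\mathcal{M}_{X:\Lambda}^{\min}(\alpha) = u\,\alpha$, where $u$ is the optimal value of the dual variable $u$ in the LP of Eq.~(\ref{eq: MD dual LP}). This is exactly what Lemmas~\ref{lemma:instr_ineq} and~\ref{lemm: ace meas dep} establish in closed form, and I would quote it rather than re-derive it.

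First I would justify the identity $\mathcal{M}_{X:\Lambda}^{\min}(\alpha) = u\,\alpha$ from LP duality. By strong duality, the primal optimum $-\mathcal{M}_{X:\Lambda}^{\min}$ equals the dual optimum $-\alpha u + \mathbf{p}_x^T\cdot\mathbf{z}$. The dual feasible region is independent of $\alpha$, so the optimal value is the lower envelope of the affine functions $\alpha\mapsto -\alpha u + \mathbf{p}_x^T\cdot\mathbf{z}$ indexed by the dual vertices; this reproduces the concavity of $-\mathcal{M}_{X:\Lambda}^{\min}$ (equivalently, the convexity asserted in Observation~\ref{obs:linear_dependency}). At $\alpha=0$ no dependence is required, forcing $\mathbf{p}_x^T\cdot\mathbf{z}=0$ at the optimal vertex, exactly as already noted below Observation~\ref{obs:linear_dependency} and verified in the explicit solutions. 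In the binary case this single vertex stays optimal over the whole feasible range of $\alpha$, so $\mathcal{M}_{X:\Lambda}^{\min}(\alpha) = u\,\alpha$ holds globally with a single slope $u$.

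Next I would invert this relationship and pass to the contrapositive. Because $\mathcal{M}_{X:\Lambda}^{\min}(\alpha) = u\,\alpha$ is the \emph{minimal} dependence compatible with a violation of size $\alpha$, any realizable distribution carrying a given dependence $\mathcal{M}_{X:\Lambda}$ must satisfy $u\,\alpha \leq \mathcal{M}_{X:\Lambda}$, i.e.\ it can violate $K_{\text{inst}}\geq 0$ by at most $\alpha \leq \mathcal{M}_{X:\Lambda}/u$. This is precisely the statement that the forward problem (minimal dependence for fixed violation) and the inverse problem (maximal violation for fixed dependence) share the same solution, as invoked in the paragraph preceding the Corollary. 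Writing a violation of magnitude $\alpha$ as $K_{\text{inst}}=-\alpha$, the bound $\alpha\leq \mathcal{M}_{X:\Lambda}/u$ rearranges to $K_{\text{inst}}+\mathcal{M}_{X:\Lambda}/u\geq 0$, the claimed adapted inequality.

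The main obstacle is the step that licenses the inversion, namely confirming that the same parameter $u$ governs both directions and that the dependence-versus-violation curve is a single line through the origin in the binary scenario. The cleanest way to secure this is to observe that $\alpha$ enters the dual only through the coefficient of $u$ in the objective, while the feasible polytope — and hence the set of candidate optimal slopes — is fixed; the argument below Observation~\ref{obs:linear_dependency} then shows the optimal slope can only increase with $\alpha$, and the explicit dual solutions in Lemmas~\ref{lemma:instr_ineq} and~\ref{lemm: ace meas dep} confirm that a single slope is optimal throughout, with $\mathbf{p}_x^T\cdot\mathbf{z}=0$ at the optimum. Given this, the inversion is immediate and the Corollary follows.
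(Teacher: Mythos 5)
Your proposal is correct and follows essentially the same route as the paper: the paper justifies the corollary by noting that the forward problem (minimal dependence for a fixed violation, solved linearly as $\mathcal{M}_{X:\Lambda}=u\alpha$ in Lemmas~\ref{lemma:instr_ineq} and~\ref{lemm: ace meas dep}) and the inverse problem (maximal violation for a fixed dependence) share the same solution, so the relation inverts to $K_{\text{inst}}+\mathcal{M}_{X:\Lambda}/u\geq 0$. Your additional dual-vertex and strong-duality justification of the single-slope linearity is a more explicit rendering of the same argument rather than a different approach.
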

The above corollary shows that one can still infer cause and effect relations even with non-perfect instruments.  Also note that in case of independence, $\mathcal{M}_{X:\Lambda}=0$, we directly recover the inequalities valid in instrumental scenario (Pearl's inequality in Eq.~\eqref{eq:binary_instrumental_ineqs} and the causal bound in Eq.~\eqref{eq: Balke 1}). 

For a more general case, when the instrumental variable can take more than two values, adapting a linear inequality valid for the perfect instrumental scenario is also possible. However, it is a more involving task as the minimal measurement dependence does not have to be linear in the observed violation, as pointed out in Observation~\ref{obs:linear_dependency}. We give numerical treatment for this problem in Section~\ref{sec: nonbinary} and in Fig.~\ref{fig: Beyond_binary}.

\subsection{Informational cost}
Above we used the $l_1$-norm (see Eq.~(\ref{eq: meas dep def})) to quantify the level of dependence in the instrumental scenario. Another common measure used to quantify the dependence between two random variables is the \emph{information cost} \cite{Hall2020,chaves2021causal}, given by the Shannon mutual information, a measure of particular relevance in the entropic approach to causal inference \cite{fritz2012entropic,chaves2014inferring,budroni2016indistinguishability}. In this case, we are interested in quantifying $I(X;\Lambda) = H(X)-H(X|\Lambda)$, where $H(X)=-\sum_{x}p(x)\log{p(x)}$ is the Shannon entropy of $X$ and $H(X|\Lambda)$ is the conditional Shannon entropy of $X$ given $\Lambda$, respectively, and logarithm is taken to be base $2$.  In particular, we ask a question of the minimal required information cost $I(X;\Lambda)$ that would allow for a violation of instrumental inequality in Eq.~(\ref{eq:binary_instrumental_ineqs}). For convenience, let us again use the notation
\begin{equation}
    K_{\mathrm{inst}} = -p(0,0|0)-p(0,1|1)+1.
\end{equation}
If no dependence between $X$ and $\Lambda$ is allowed, then $K_{\mathrm{inst}}\geq 0$. We are now ready to present our next result.
\begin{lemma}\label{lemma:inf_cost_binary}
For the instrumental scenario with binary observed random variables $X,A,B$ and a latent variable $\Lambda$, with $X$ uniformly distributed, the minimal informational cost required to explain a value $K_{\mathrm{inst}} < 0$ of instrumental inequality is $I(X;\Lambda) = 1-h\left(\frac{1-K_{\mathrm{inst}}}{2}\right)$, where $h(p) = -p\log(p)-(1-p)\log(1-p)$ is the binary entropy.
\end{lemma}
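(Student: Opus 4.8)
The plan is to reduce the statement to a constrained entropy optimization and then bound the objective from both sides. Since $X$ is uniform, $H(X)=1$ and hence $I(X;\Lambda)=1-H(X|\Lambda)$, so it suffices to determine the \emph{maximal} $H(X|\Lambda)$ compatible with a prescribed value of $K_{\mathrm{inst}}$. First I would argue that the deterministic decomposition of Eq.~\eqref{eq: Instrumental_md deterministic} is without loss of generality for the information cost: any stochastic response function is a mixture of deterministic ones realized by local randomness $R$ independent of both $X$ and $\Lambda$, and since $R\perp(X,\Lambda)$ gives $I(X;R|\Lambda)=0$, appending $R$ to the latent variable leaves $I(X;\Lambda)$ unchanged. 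Writing $\Lambda=(\Lambda_a,\Lambda_b)$ and $V:=1-K_{\mathrm{inst}}=p(0,0|0)+p(0,1|1)$, I would read off from Eq.~\eqref{eq: Instrumental_md deterministic} that $p(0,0|0)=2\,p(X=0,A=0,B=0)$ and $p(0,1|1)=2\,p(X=1,A=0,B=1)$, so the constraint becomes $P_0+P_1=V/2$, where $P_0=p(X=0,\Lambda\in S_0)$, $P_1=p(X=1,\Lambda\in S_1)$, and $S_0,S_1$ are the sets of $\lambda$ whose response functions satisfy $f_{\lambda_a}(0)=0,\,g_{\lambda_b}(0)=0$ and $f_{\lambda_a}(1)=0,\,g_{\lambda_b}(0)=1$, respectively. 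The key structural observation is that $S_0\cap S_1=\varnothing$, since they demand $g_{\lambda_b}(0)=0$ versus $g_{\lambda_b}(0)=1$.

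For achievability I would exhibit an explicit two-outcome model. Take one $\lambda_1\in S_0$ and one $\lambda_2\in S_1$ and set, with $a=V/4$ and $b=(2-V)/4$, the joint weights $p(X=0,\lambda_1)=a$, $p(X=1,\lambda_1)=b$, $p(X=0,\lambda_2)=b$, $p(X=1,\lambda_2)=a$. This gives $p(X=0)=p(X=1)=1/2$, realizes $P_0+P_1=2a=V/2$, and yields $p(\lambda_1)=p(\lambda_2)=1/2$ with $p(X=0|\lambda_1)=V/2$ and $p(X=0|\lambda_2)=1-V/2$; hence $H(X|\Lambda)=h(V/2)$ and $I(X;\Lambda)=1-h(V/2)$, matching the claimed value.

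For the converse I would invoke Fano's inequality. Define the estimator $\hat X(\lambda)=0$ on $S_0$, $\hat X(\lambda)=1$ on $S_1$, and arbitrary elsewhere; its success probability is at least $P_0+P_1=V/2$, so its error probability obeys $P_e\le 1-V/2\le\tfrac12$ (using $V\ge 1$ for any genuine violation). Since $\hat X$ is a function of $\Lambda$ and $X$ is binary, Fano gives $H(X|\Lambda)\le H(X|\hat X)\le h(P_e)$, and the monotonicity of $h$ on $[0,\tfrac12]$ together with its symmetry yields $h(P_e)\le h(1-V/2)=h(V/2)$. Therefore $H(X|\Lambda)\le h(V/2)$, i.e.\ $I(X;\Lambda)\ge 1-h(V/2)$. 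Combining the two bounds gives $I(X;\Lambda)=1-h\!\left(\tfrac{1-K_{\mathrm{inst}}}{2}\right)$.

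I expect the main obstacle to be the converse rather than the construction: one must ensure the reduction to deterministic response functions does not decrease the attainable $H(X|\Lambda)$ (the derandomization argument handles this), and one must pick the coarse-graining so that the constraint $P_0+P_1=V/2$ becomes a guessing probability to which Fano applies cleanly. The disjointness of $S_0$ and $S_1$ is precisely what makes the constraint value a lower bound on the optimal probability of guessing $X$ from $\Lambda$, which is the crux of the argument.
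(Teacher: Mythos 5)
Your proof is correct and follows essentially the same route as the paper: reduce to deterministic response functions, coarse-grain $\Lambda$ into a binary guess of $X$ whose success probability is lower-bounded by $\frac{1-K_{\mathrm{inst}}}{2}$, and combine the data-processing inequality with Fano's inequality (plus monotonicity/symmetry of $h$), with tightness shown by an explicit two-point latent model. Your achievability construction is in fact stated more carefully than the paper's, whose printed assignment $p(X=0|\lambda_a,\lambda_b)=\frac{1-K_{\mathrm{inst}}}{2}$ for \emph{all} $\lambda$ is evidently a typo (it would make $X$ and $\Lambda$ independent); the conditional distribution must differ across the two groups of $\lambda_b$, exactly as in your construction.
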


\begin{proof} We rewrite the conditional join probabilities occurring in the expression $K_{\mathrm{inst}}$ using decomposition in  Eq.~(\ref{eq: Instrumental_md}) and the following notations for the deterministic assignments  $\Lambda^{(0)}_b=\{\lambda_b\ |\ p(B=0|X=0, \lambda_b) =1 \}$  and $\Lambda^{(1)}_b=\{\lambda_b\ |\  p(B=1|X=1, \lambda_b) =1 \}$.

\begin{align}\begin{split}
    \frac{1-K_{\mathrm{inst}}}{2} & =     \sum_{\lambda_a}\sum_{\lambda_b\in \Lambda_b^{(0)}} p(A=0|X=0, \lambda_a)p(X=0,\lambda_a, \lambda_b)+\sum_{\lambda_a}\sum_{\lambda_b\in \Lambda_b^{(1)}} p(A=0|X=1, \lambda_a)p(X=1,\lambda_a, \lambda_b)\\
   &  \leq  \sum_{\lambda_a}\sum_{\lambda_b\in \Lambda_b^{(0)}} p(X=0,\lambda_a, \lambda_b)+\sum_{\lambda_a}\sum_{\lambda_b\in \Lambda_b^{(1)}} p(X=1,\lambda_a, \lambda_b) \\ &=\sum_{ \lambda_b\in \Lambda_b^{(0)}} p(X=0, \lambda_b)+\sum_{ \lambda_b\in \Lambda_b^{(1)}} p(X=1, \lambda_b)\\
    & =  p(X=0,\lambda_b\in  \Lambda_b^{(0)})+p(X=1,\lambda_b\in  \Lambda_b^{(1)})= p(X=E), 
\end{split}\label{eq:inf_cost_binary_proof}
\end{align}
where $E$ is a random variable such that $E=0$ if $\lambda_b \in \Lambda_b^{(0)}$,  and $E=1$ if $\lambda_b \in \Lambda_b^{(1)}$. Since $E$ concerns a particular grouping of latent variable $\Lambda$, we can first use the data processing inequality and then Fano's inequality to obtain,
\begin{align}
    I(X;\Lambda)\geq  I(X;E)= H(X)-H(X|E)\geq 1- h(X=E) \geq 1-h\left(\frac{1-K_{\mathrm{inst}}}{2}\right).
\end{align}
The last inequality follows since we are only interested in the cases when $K_{\mathrm{inst}} < 0$. The above lower bound is tight for all $K_{\mathrm{inst}} < 0$, since we can always set the following assignments:  $p(A=0|X=0, \lambda_a) = p(A=0|X=1, \lambda_a) = 1$,  and  $p(X=0|\lambda_a,\lambda_b)=\frac{1-K_{\mathrm{inst}}}{2}$,  $p(X=1|\lambda_a,\lambda_b)=1-\frac{1-K_{\mathrm{inst}}}{2}$, $\forall \lambda_a,\lambda_b \in \Lambda$.
\end{proof}
The same result applies to any of the four instrumental inequalities in Eq.~(\ref{eq:binary_instrumental_ineqs}).

\subsection{Beyond the binary case}\label{sec: nonbinary}
So far we have restricted our attention to the case where all variables are binary. Here, we generalize the results for the instrumental variable, which can take more values.  

Concerning instrumental inequalities, if the variables $A$ and $B$ are binary, it is known that the instrumental scenario is completely characterized by three inequalities up to the relabelings of the variables, $I_i\leq 0$,  $i= 1,2,3$~\cite{Kedagni2020}. The inequality $I_1\leq 0$ corresponds to Pearl's inequality and was already discussed in the binary case (See Eq.~(\ref{eq:binary_instrumental_ineqs})), the second one is known as Bonet's inequality~\cite{bonet2013instrumentality},
\begin{equation}\label{eq:Bonet}
p(0,1|0)-p(0,1|1)-p(1,1|1)-p(1,0|2)-p(0,1|2)\leq 0,    
\end{equation}
and the third one is Kedagni's inequality~\cite{Kedagni2020},
\begin{equation}\label{eq:Kegani}
    p(0,0|0)+p(1,0|0)-p(0,1|1)-p(1,0|1)-p(0,0|2)-p(1,0|2)-p(0,0|3)-p(1,1|3)\leq 0.
\end{equation}
One can obtain other inequalities from Refs.~\cite{bonet2013instrumentality,Kedagni2020} by relabeling inputs and outputs and by coarse graining values of $X$.

Considering the case where $X$  assumes up to three possible values, we obtained two new classes of causal bounds, for which we give two representatives below. All the other causal bounds for three inputs can be obtained by relabeling inputs or outputs in these two inequalities.
\begin{eqnarray}
\label{eq:ace34}
     C_2 & = & p(0,0|0) + p(0,0|2) + p(1,0|0) + p(1,1|1) + p(1,1|2) - 2.\\
     C_3 & = & p(0,0|0)+p(0,0|1)-p(0,1|1)+p(0,1|2)+p(1,0|0)-p(1,0|1)+p(1,1|1)+p(1,1|2)-2.
\end{eqnarray}

For all the causal bounds and the instrumental inequalities we use the LP in Eq.~(\ref{Optimization: causal}) to estimate the  minimal measurement dependency in order to explain the violation by the amount of $\alpha$. The results are summarized in Fig.~\ref{fig: Beyond_binary}.  

Even though we only provide closed formula solutions of the LPs in the simplest binary case, in more general scenarios, for a given distribution $p(x)$, it is sufficient to solve the LP in a very few points due to the nature of the functional dependence being convex piecewise linear. For example, for the instrumental inequality $I_2\leq 0$, the numerical results in Fig.~\ref{fig: Beyond_binary} suggest that for the chosen fixed distributions of $X$, the minimal measurement dependence $\mathcal{M}_{X:\Lambda}$ is linear in $\alpha$. We could, however, reach the same conclusion by solving the LP for two different values of $\alpha$ in the interval $\alpha\in(0,1]$ for the same fixed distributions of $X$. The first value of $\alpha$ can be arbitrary, but the second one must be equal to $\alpha=1$. Additionally, we know that for $\alpha=0$, the measurement dependence  $\mathcal{M}_{X:\Lambda}=0$. If the  values of the minimal measurement dependence corresponding to these three points belong to the same straight line, we invoke the convexity property, and conclude that, $\mathcal{M}_{X:\Lambda}=u\alpha$, where $u$ is the slope of the obtained straight line. For example, for the uniformly distributed $X$,  $\mathcal{M}_{X:\Lambda}=\frac{2}{3}\alpha$, where the coefficient of $\frac{2}{3}$ can be obtained from the LP up to the numerical precision.

\begin{figure}
    \centering
    {\includegraphics[width = .33\textwidth]{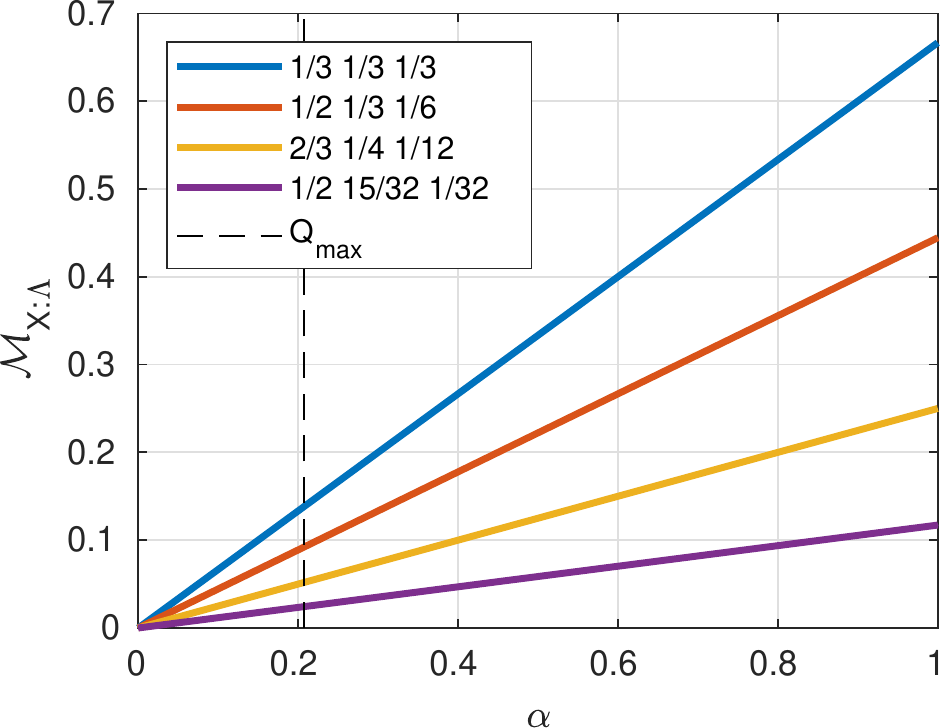}}
       {\includegraphics[width = .33\textwidth]{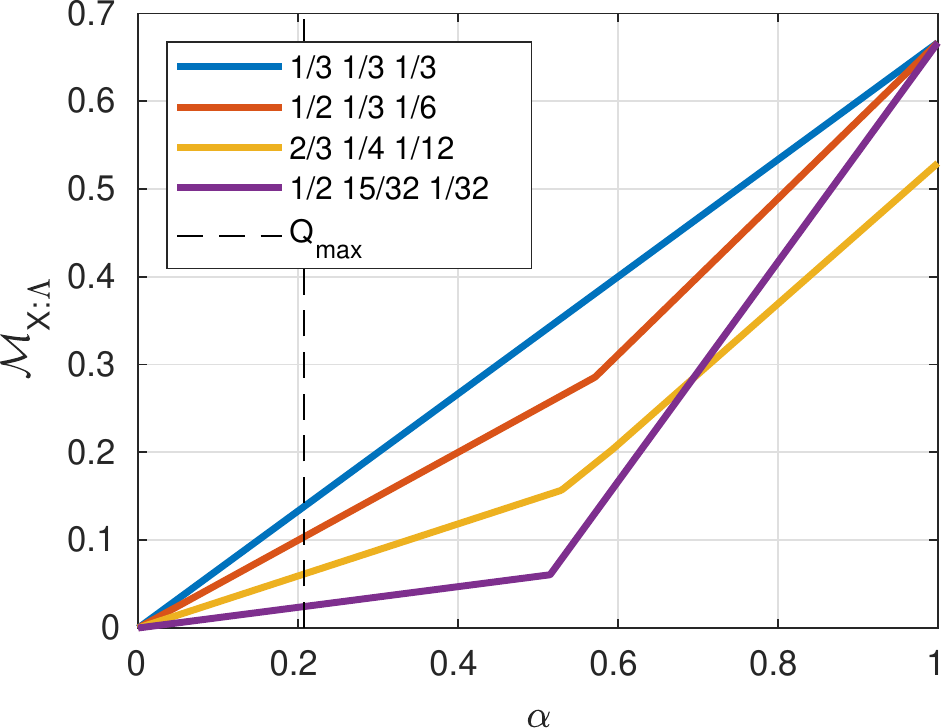}}
     {\includegraphics[width = .33\textwidth]{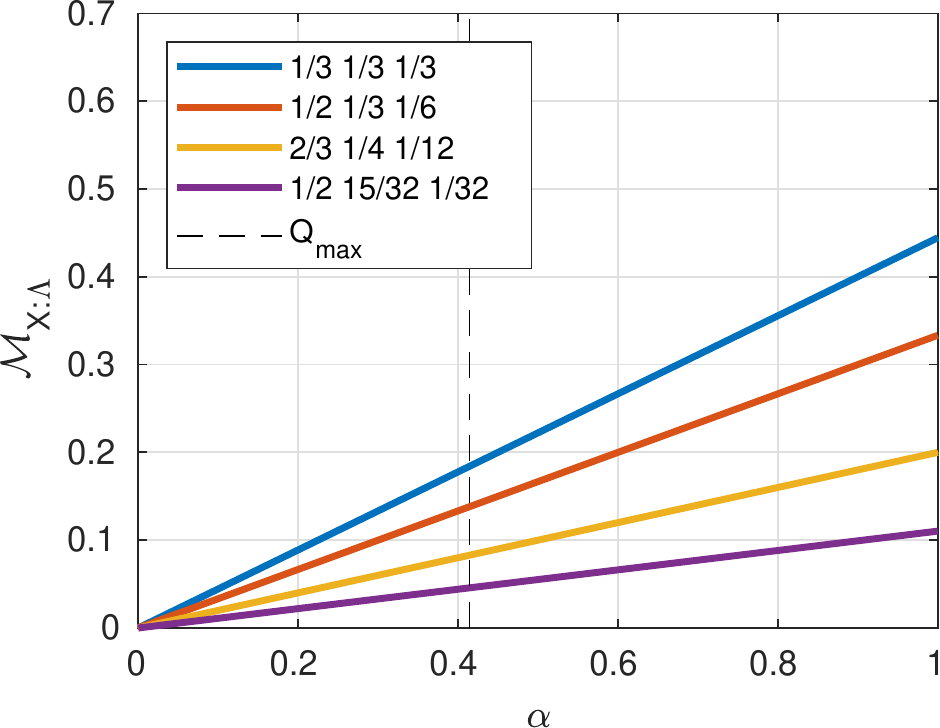}}
    \caption{Measurement dependence $\mathcal{M}_{X:\Lambda}$ for violations $\alpha$ of instrumental inequality $I_2$ (left), and causal bounds $C_2$ (center) and $C_3$ (right). The maximal violation of each inequality attainable in quantum theory is marked by $Q_{\mathrm{max}}$ (See Section~\ref{sec4}).}
    \label{fig: Beyond_binary}
\end{figure}

\section{Quantum violations of instrumental tests}
\label{sec4}

We saw that the instrumental and causal bounds can be violated if a certain amount of measurement dependence is present between an instrumental variable and a classical common cause $\Lambda$. However, a violation is also possible if we do not assume any relaxation on the instrumental scenario, but instead we consider the case, when the unobserved common cause can be a quantum state~\cite{chaves2018quantum, Gachechiladze2020, agresti2021experimental}. More precisely, in the quantum instrumental scenario considered here, all three observable variables, $X$, $A$ and $B$, are still classical random variables,  but instead of the latent variable $\Lambda$, we have a latent quantum state $\rho_{AB}$. This type of quantum causal model produces observable correlations using the \textit{Born rule} for measurements in quantum mechanics,
\begin{equation}
    p_Q(a,b|x)=\tr\left[(M^x_a\otimes N^a_b )\rho_{AB}\right].
\end{equation}
Here $\rho_{AB}$ is a quantum state of two subsystems, represented by the so-called \textit{density matrix} that is a positive, trace-$1$ linear operator acting on the tensor product of two Hilbert spaces $\mathcal{H}_A\otimes \mathcal{H}_B$,  $M^x_a$ is a positive operator acting on the first subsystem (Hilbert space $\mathcal{H}_A$) and describes a measurement depending on the choice $x$ with outcome $a$. Similarly,  $N^a_b$  is a positive operator acting on the second subsystem  (Hilbert space $\mathcal{H}_B$) and describes a measurement depending on the choice $a$ (which is the measurement outcome obtained on the first subsystem) with outcome $b$. 

In the case of the simplest instrumental scenario, the statistics obtained from a latent quantum state cannot violate the instrumental inequalities in Eq.~(\ref{eq:binary_instrumental_ineqs}). Remember, that such inequalities can be violated if the measurement dependence is present. However, it was shown in Refs.~\cite{chaves2018quantum,Gachechiladze2020} that in the case of  binary variables, the causal bound in Eq.~(\ref{eq: Balke 1}) can be violated without any measurement dependence if the intervention on $A$ is made in the quantum instrumental scenario. In a full analogy with the classical case, one can define quantum interventions as
\begin{equation}
    p_Q(b|\mathrm{do}(a))=\tr\left[\left(\mathbbm{1}\otimes N^a_b \right)\rho_{AB}\right], 
\end{equation}
where a measurement is performed only on the second subsystem. This implies that if an actual intervention is made, the observed
quantum average causal effect (qACE) is given by,
\begin{equation}
    \mathrm{qACE}_{A \rightarrow B}=\max_{a,a',b}\left|\tr\left[\left(\mathbbm{1}\otimes (N^a_b-N^{a'}_b) \right)\rho_{AB}\right]\right|.
\end{equation}

Ref.~\cite{Gachechiladze2020} showed that any pure entangled quantum state $\rho_{AB}=\ket{\psi}\bra{\psi}$,  where $\ket{\psi}=\sin\alpha\ket{0}\otimes \ket{0} +\cos{\alpha} \ket{1} \otimes \ket{1}\in \mathbbm{C}^2\otimes \mathbbm{C}^2$ and appropriate incompatible quantum measurements, $M_a^x=\frac{1}{2}\left(\mathbbm{1}+(-1)^a(\sin\theta_x\sigma_X+\cos\theta_x\sigma_Z)\right)$ and $N_b^a=\frac{1}{2}\left(\mathbbm{1}+(-1)^b(\sin\eta_a\sigma_X+\cos\eta_a\sigma_Z)\right)$, where $\sigma_X$ and $\sigma_Z$ are Pauli matrices and the vectors $\ket{0}$ and $\ket{1}$ are normalized  eigenstates of $\sigma_Z$, can violate the bound in Eq.~(\ref{eq: Balke 1}). The maximal possible violation was numerically obtained (and was verified by the hierarchy of semidefinite programs~\cite{navascues2007bounding}) to be  $3-2\sqrt2\approx 0.1716$.  

Using the results of the previous sections, we can conclude that in order to explain such a quantum violation, the amount of minimum measurement dependency in the classical instrumental causal structure must at least be $\mathcal{M}_{X:\Lambda}=\frac{4p(X=0)p(X=1)}{2-p(X=0)}(3-2\sqrt{2})$ and is maximal for $P(X=0)=(2-\sqrt{2})$ and is equal to $\mathcal{M}_{X:\Lambda}=(68 - 48 \sqrt{2})\approx0.1177 $.

In case of more general instrumental scenario, where $X$ can take more than two values, $I_2\leq 0$ and $I_3\leq 0$ can be violated by quantum states and measurements~\cite{chaves2018quantum}, both by the maximally entangled state (that is when $\sin{\alpha}=\frac{1}{\sqrt{2}}$) with the amount of $\left(\frac{1}{\sqrt{2}}-\frac12 \right)\approx0.2071$ and $ \sqrt2-1\approx 0.4142$, respectively. See Fig.~\ref{fig: Beyond_binary}~(left) for the relation between the minimal required measurement dependence in classical instrumental scenario and the amount of violation of $I_2\leq 0$ for various probability distributions of the instrumental variable. In particular, for the uniformly distributed instrumental variable, the minimal measurement dependence required to explain the quantum violation is $\mathcal{M}_{X:\Lambda}=\frac13 (\sqrt{2}-1)\approx0.1381$. The minimal measurement dependence needed to explain the maximal quantum violation of $I_3\leq 0$ for the uniformly distributed instrumental variables is $\mathcal{M}_{X:\Lambda}=\left(\frac{1}{\sqrt{2}}-\frac12 \right)\approx0.2071$.

Finally, we consider the quantum violation of the causal bounds for the instrument that takes three values. The inequality $\mathrm{ACE}_{A\rightarrow B}\geq C_2$ can be violated by the maximally entangled state  with the amount of $\left(\frac{1}{\sqrt{2}}-\frac12 \right)\approx0.2071$ 
and the inequality $\mathrm{ACE}_{A\rightarrow B}\geq C_3$ with the amount of $ \sqrt2-1\approx 0.4142.$  In order to explain these violations, the amount of minimum measurement dependency in the classical instrumental causal structure depends on a probability distribution of the instrumental random variable. See Fig.~\ref{fig: Beyond_binary} ~(center) and (right) for the particular examples of such distributions. We highlight that even though the violations of causal bounds match with the violations of instrumental inequalities, these quantities are of a very different nature. In particular, the violation of causal bounds required both interventional and observational probability distributions while the violation of instrumental inequalities rely solely on observational data.

\section{Discussion}
\label{sec5}
Instrumental variables offer ways to estimate causal influence even under confounding effects and without the need for interventions. Strikingly, as discovered in Ref.~\cite{Balke1997}, one can infer the effect of interventions, without resorting to any structural equations, simply from observational data obtained with the help of an instrument. As already recognized long ago \cite{johnston1963econometric}, however, ``the real difficulty in practice of course is actually finding variables to play the role of instruments''. Since the potential correlation of the instrument with any latent variables is in principle unobservable, it might seem that the exogeneity of a given instrument is a matter of trust and intuition rather than a fact supported by the data.

Motivated by this fundamental problem, the data from an instrumental test  \cite{pearl1995testability,bonet2013instrumentality,Kedagni2020,poderini2020exclusivity} can be employed to benchmark the amount of dependence the instrument can have with a confounding variable. More precisely, we quantify such correlations via a $l1$-norm, measuring by how much the instrumental variable fails to be exogenous. The violation of an instrumental inequality allows us then to put lower bounds on this dependence.
In turn, we derive bounds for the average causal effect \cite{pearl2009causality} taking into account that some level of dependence, lower bounded by the violation of instrumental inequality, is present. That is, we turn the causal bounds in a reliable tool even if the instrument is not really exogenous.

Relying on a linear program description, we obtain fully analytical results for the simplest instrumental scenario where all variables are binary. We study a more general case of trinary instrumental variable numerically using our linear programming technique. In parallel, we also derived new bounds for the average causal effect (Eq. \eqref{eq:ace34}), that to the best of our knowledge, are new to the literature. We also consider applications of our generalized instrumental inequalities and causal bounds to consider the problem of measurement independence (also known as ``free-will'') in the foundations of quantum physics.

It is worth noting that the effect of imperfect instruments has previously been considered~\cite{bartels1991instrumental}. There, however, the study was limited to regression bivariate models, while here our results are free of any structural equations and valid for any causal mechanisms between the variables. Even though, we have focused on the case where treatment and effect variables are binary, the linear program framework we propose can also be extended to variables assuming any discrete number of values (limited, of course,  by the computational complexity of the problem). Another interesting question for future research is to understand whether similar results may hold for the case of continuous variables, a direction that we hope might be triggered by our results.

\begin{acknowledgments}
 N.M.~acknowledges the support by the Foundation for Polish Science (IRAP project,
ICTQT, contract no. 2018/MAB/5, co-financed by EU
within Smart Growth Operational Programme) and
the Deutsche Forschungsgemeinschaft (DFG, German
Research Foundation) via the Emmy Noether grant
441423094.
This work was supported by the John Templeton Foundation via the grant Q-CAUSAL No 61084 (the opinions expressed in this publication are those of the author(s) and do not necessarily reflect the views of the John Templeton Foundation) Grant Agreement No. 61466, by the Serrapilheira Institute (grant number Serra – 1708-15763), by the Simons Foundation (Grant Number 884966, AF), the Brazilian National Council for Scientific and Technological Development (CNPq) via the National Institute for Science and Technology on Quantum Information (INCT-IQ) and Grants No. 406574/2018-9 and 307295/2020-6, the Brazilian agencies MCTIC and MEC. M.G.~is funded by the Deutsche Forschungsgemeinschaft (DFG, German Research Foundation) under Germany’s Excellence Strategy – Cluster of Excellence Matter and Light for Quantum Computing
(ML4Q) EXC 2004/1 – 390534769.
\end{acknowledgments}

\bibliography{ref}
\end{document}